\documentclass[accepted]{uai2026} 
                        
\usepackage[american]{babel}

\usepackage{natbib} 
\usepackage{mathtools} 
\usepackage{booktabs} 
\usepackage{tikz} 
\usetikzlibrary{arrows.meta}

\usepackage{amsmath,amssymb,amsthm,amsfonts}
\newtheorem{theorem}{Theorem}

\newcommand{\bqn}{\begin{eqnarray}}
\newcommand{\eqn}{\end{eqnarray}}
\newcommand{\bq}{\begin{eqnarray*}}
\newcommand{\eq}{\end{eqnarray*}}

\author[1]{Moo~K.~Chung}
\author[2]{Anass~B.~El-Yaagoubi}
\author[2]{Hernando~Ombao}

\affil[1]{%
    Department of Biostatistics and Medical Informatics\\
    University of Wisconsin\\
    Madison, Wisconsin, USA
}
\affil[2]{%
    Statistics Program\\
    King Abdullah University of Science and Technology\\
    Thuwal, Saudi Arabia
}

\begin{document}
\title{Vector Space of Cycles}

\maketitle

\begin{abstract}
Most statistical and machine learning methods for directed interactions focus on pairwise effects among variables. Even cyclic models represent feedback mainly through node-level dependencies, making large-scale recurrent organization difficult to estimate and compare. This limitation is acute in biological and neural systems, where interactions are highly recurrent and involve many overlapping cycles.

We introduce a variational framework for statistical inference on cyclic interactions. Directed interactions are represented as edge flows on a simplicial complex and evolved under an energy-minimizing dynamical system. The dynamics separate transient components from persistent harmonic flows, yielding a low-dimensional cycle space that captures stable recurrent organization. Rather than enumerating cycles, the framework represents cyclic interactions as elements of a Hilbert space, enabling projection, averaging, comparison, and population-level inference.

We establish theoretical properties of the harmonic projection, including characterization of the cycle space, variance reduction, and population inference. Simulations show improved recovery of cyclic structure in dense recurrent systems compared with existing directed-interaction methods. Applied to resting-state fMRI from 400 human subjects, the framework reveals reproducible large-scale cyclic organization not detectable through edgewise averaging. These results provide a scalable statistical framework for recurrent interactions in high-dimensional dynamical systems.
\end{abstract}

\section{Introduction}

Functional relationships between variables have traditionally been characterized through undirected measures of association, including covariance and correlation \citep{hotelling.1992,anderson.1984}, as well as more general measures of statistical dependence \citep{szekely.2007}. Statistical modeling of multivariate dependence subsequently evolved through conditional independence \citep{dawid.1979,cox.1993} and graphical models \citep{wermuth.1990,lauritzen.2002}. While these frameworks effectively characterize statistical dependence, they do not distinguish the direction of interaction or information flow.

This limitation motivated statistical methods for modeling directed dependence. Conditional independence provides a probabilistic framework for characterizing directed relationships among variables \citep{dawid.1979}. Graphical models represent multivariate directed dependence through conditional independence relations encoded by directed graphs \citep{wermuth.1990,cox.1993,lauritzen.2002}. Structural equations describe directed dependence through systems of regression equations \citep{spirtes.2000,pearl.2009,peters.2017}. Granger-causal formulations estimate pairwise directed interactions in multivariate time series \citep{granger.1969,piancastelli.2023}. Transfer entropy provides a complementary information-theoretic measure of directed dependence \citep{schreiber.2000,redondo.2025}. Bayesian network methods similarly infer directed dependence under probabilistic graphical models \citep{driver.2017}. Machine learning approaches likewise estimate directed interactions under structural constraints \citep{spirtes.2000,pearl.2009,peters.2017}.

Despite their methodological differences, these approaches share a common statistical formulation in which inference is performed on node-level parameters or pairwise directed interactions. Network-level organization is therefore obtained only by aggregating these local estimates. Although highly successful for estimating directed effects, existing frameworks do not explicitly represent cyclic interactions as inferential objects. This limitation becomes particularly important in systems whose dynamics are dominated by recurrent interactions rather than isolated pairwise effects. Recurrent organization is a defining characteristic of many biological and neural systems \citep{keilholz.2017,anand.2023.TMI}, and is similarly observed in social and economic networks \citep{forre.2018}. Although existing directed models naturally accommodate recurrent connectivity, cyclic organization remains encoded only implicitly through collections of directed edges \citep{pearl.2009,peters.2017}. As a result, higher-order cyclic organization cannot be directly estimated, compared, averaged, or subjected to population-level statistical inference.

When the primary object of interest is intrinsically nonlinear, a common strategy in modern statistics is to construct an appropriate representation on which standard inferential procedures become well defined. Functional data analysis represents random functions as elements of Hilbert spaces, enabling estimation, regression, and hypothesis testing through linear operators \citep{wang.2016.functional}. Manifold-valued data are commonly analyzed by mapping observations to tangent spaces, where Euclidean statistical methods become locally applicable \citep{dryden.2009}. 
Object-oriented data analysis develops statistical methodology for complex data objects by constructing representations that preserve their geometric structure while enabling standard statistical operations \citep{marron.2014,marron.2021}. Fr\'echet statistics establish averaging, regression, and inference for random objects in general metric spaces \citep{petersen.2019}. These developments illustrate a common statistical principle: constructing an appropriate representation that enables standard statistical inference. 

Motivated by this principle, we develop a statistical framework for directed cyclic interactions. Our approach represents cyclic interactions in a low-dimensional harmonic vector space derived from variational principles in Dirichlet--Hodge diffusion \citep{chung.2026.ISBI,schaub.2020}. This representation avoids explicit cycle enumeration while enabling projection, averaging, covariance estimation, regression, comparison, and hypothesis testing through standard linear operations. In this framework, cyclic interactions themselves, rather than individual directed edges, constitute the primary inferential object.

\paragraph{Related work.}
Classical graph-theoretic methods characterize networks through local and global summaries, including degree, clustering, modularity, path length, and centrality \citep{bassett.2017,newman.2018}. Cyclic organization is typically characterized indirectly through path redundancy, strongly connected components, or feedback motifs, or recovered explicitly through cycle enumeration algorithms \citep{tarjan.1972,johnson.1975}. Although effective for sparse or moderately sized networks, explicit cycle recovery becomes computationally prohibitive in dense recurrent systems containing numerous overlapping cycles.

Persistent homology and topological data analysis (TDA) characterize cycles through homological invariants across filtration scales \citep{lee.2014.MICCAI}. Statistical inference for persistent homology has subsequently been developed through persistence landscapes, confidence sets, and related methodologies \citep{wasserman.2018}. These methods detect the presence and persistence of cycles but do not directly model directed interaction flows or support statistical inference on directed cyclic interactions \citep{chazal.2014}. Hodge-theoretic methods analyze directed edge flows by decomposing them into gradient, curl, and harmonic components \citep{bourakna.2024,lim.2020}. Existing work has primarily focused on flow decomposition and network characterization rather than statistical inference on cyclic interactions. In contrast, we represent cyclic interactions as elements of a harmonic vector space, transforming recurrent organization into a statistical object that admits averaging, comparison, covariance estimation, and population-level inference.

Our work is also related to cyclic structural causal models \citep{richardson.1996,lacerda.2008,mooij.2013} and their probabilistic formulations \citep{forre.2018,bongers.2021}. Despite allowing recurrent connectivity, these models retain node- or edge-level parameterizations of directed dependence, with cyclic organization represented only implicitly through collections of directed interactions. Statistical inference is therefore performed on structural coefficients or conditional distributions rather than directly on recurrent cyclic interactions. In contrast, our framework treats cyclic interactions as the primary inferential object by representing them in a harmonic vector space, enabling projection, averaging, comparison, covariance estimation, and population-level inference.

Finally, variational and energy-based methods have long provided a mathematical framework for studying dynamical systems \citep{goldstein.1950,arnold.2013,ambrosio.2005}. More recently, Hamiltonian and Lagrangian neural networks have incorporated physical principles into learned dynamical representations \citep{greydanus.2019,lutter.2019,cranmer.2020}. Our objective is different. We employ an energy-minimization principle to identify and separate transient interactions from persistent recurrent organization, yielding a low-dimensional cycle space that supports averaging, comparison, covariance estimation, and population-level statistical inference.

\paragraph{Contributions.}
(i) We introduce a variational framework for identifying persistent cyclic interactions in directed networks. Under Dirichlet--Hodge diffusion, transient interaction components dissipate while harmonic flows remain as stable recurrent structures, providing a principled mechanism for separating persistent cyclic organization from transient edgewise fluctuations. (ii) We formulate cyclic interactions as elements of a vector space. Rather than representing cycles as combinatorial objects requiring explicit enumeration, we represent them as harmonic flows in a low-dimensional Hilbert space. This transforms cycle analysis into linear operations such as projection, averaging, and comparison. (iii) We establish statistical properties of the cycle-space representation. We characterize the harmonic subspace, derive variance-reduction properties of harmonic projection, and develop population-level inference based on directional statistics on the harmonic unit sphere. (iv) We validate the proposed framework through both synthetic experiments and neuroimaging applications. Simulations demonstrate substantially improved recovery of cyclic interactions in dense recurrent networks. We further apply the proposed methodology to resting-state functional magnetic resonance imaging (rs-fMRI) data from the Human Connectome Project (HCP), illustrating its ability to identify persistent recurrent organization in large-scale brain networks.

\paragraph{Scope and limitations.}
The proposed framework is not intended to replace Pearl--Rubin causal identification or to estimate intervention effects \citep{pearl.2009,peters.2017}. Its target is narrower: statistical inference on persistent cyclic interactions in feedback-dominated dynamical systems. The method assumes that directed edge flows are first estimated from the observed data and then used as input to the proposed energy-minimization principle. Thus, its performance depends on the quality of the initial directed interaction estimates. When the underlying system contains little recurrent structure, or when there is no cycle, the harmonic subspace is trivial and no persistent cyclic organization can be inferred.

\section{Lagrangian Dynamics}

The classical Lagrangian framework \citep{arnold.2013} describes dynamical systems through an energy-based variational principle. In conservative systems, such as an ideal pendulum, energy is exchanged between kinetic and potential forms and motion may persist indefinitely. Here, we use this variational viewpoint to model how directed interactions evolve under geometric and topological constraints.

\paragraph{Simplicial complex for encoding interactions.}
To model higher-order interaction structure, we represent  interactions as a simplicial complex consisting of vertices $V$ (0-simplices), edges $E$ (1-simplices), faces $F$ (2-simplices), and higher-dimensional simplices (Figure~\ref{fig:IBC-pipeline}) \citep{edelsbrunner.2010}. This generalizes pairwise graphs by encoding multi-way structure as higher-dimensional simplices, providing a combinatorial scaffold for topological and variational analysis. Given multivariate observations $Z(t)$ at the vertices, we construct a time-varying directed edge flow
\(
X(t)\in\mathbb{R}^{|E|},
\)
where each component quantifies directed interaction along an oriented edge. The edge flow may be instantiated using any directed interaction measure, including time-lagged dependence, coherence, Granger-type residuals, or transfer entropy \citep{keilholz.2017,barnett.2014,peters.2017,schreiber.2000}. Thus, the framework is estimator-agnostic and treats $X(t)$ as the input interaction field for subsequent cycle-space analysis.

The topology of the simplicial complex is encoded by boundary matrices $\mathbf{B}_1\in\mathbb{R}^{|V|\times|E|}$ and $\mathbf{B}_2\in\mathbb{R}^{|E|\times|F|}$  \citep{anand.2023.TMI,lim.2020}. Acting on an edge flow, $\mathbf{B}_1$ measures node-level source--sink imbalance, while $\mathbf{B}_2^\top$ measures circulation around filled faces. These operators define the discrete differential structure used to quantify transient imbalance, local circulation, and persistent global cyclic organization within a unified variational framework.

\begin{figure}[t]
	\centering
	\includegraphics[width=1\linewidth]{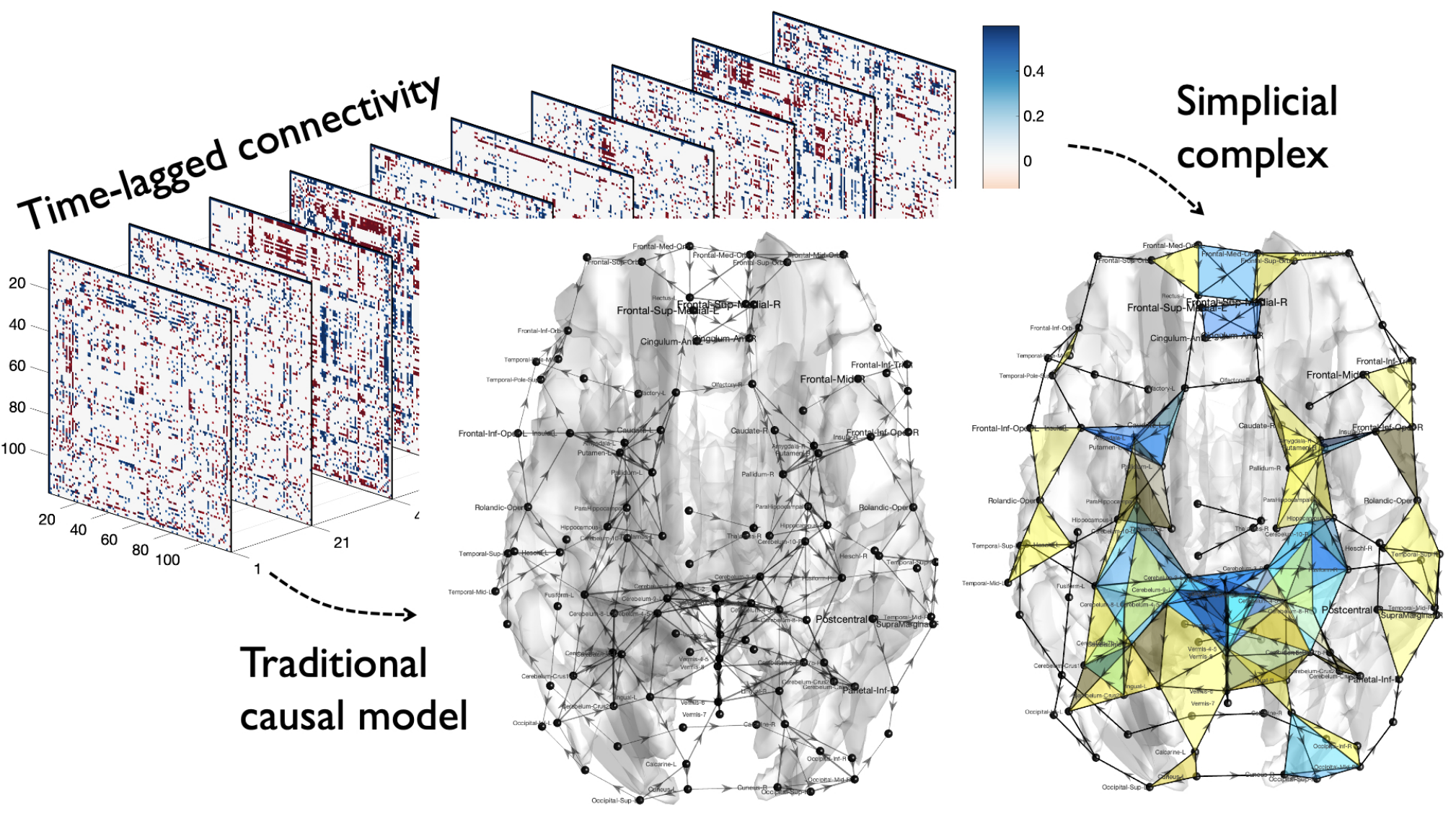}
	\caption{
{\bf Left:} Time-lagged dynamic connectivity matrix representing directed interaction dynamics.
{\bf Middle:} Conventional directed-network analysis represents pairwise interactions on a graph.
{\bf Right:} The proposed framework represents interactions on a simplicial complex encoding nodes, edges, and faces. The topology does not impose cyclic organization \emph{a priori}; persistent cyclic structure is inferred from the observed interaction flow through variational dynamics on this scaffold.
}
\label{fig:IBC-pipeline}
\end{figure}

\paragraph{Lagrangian.}
We represent directed interactions on a network by a time-dependent \emph{edge flow}
\(X(t)\in\mathbb{R}^{|E|}\), where each component of \(X(t)\) encodes the directed strength of interaction along an edge at time \(t\).

Let
\(
\dot X(t)=\frac{d}{dt}X(t)
\)
denote its time derivative. We define the Lagrangian governing the evolution of the interaction flow as
\[
\mathcal{L}(X,\dot X)
=
\frac12 \|\dot X\|_2^2
-
\mathcal{E}(X).
\]
The first term is a kinetic energy that penalizes rapid temporal changes in the interaction flow, promoting temporal smoothness. The second term is a {\it Dirichlet potential energy} \citep{chung.1997} induced by the topology of the simplicial complex,
\(
\mathcal{E}(X)=\frac12 X^\top \Delta_1 X,
\)
where \(\Delta_1\) is the \(1\)-Hodge Laplacian  \citep{anand.2023.TMI,lim.2020,schaub.2020},
\(
\Delta_1
=
\mathbf{B}_1^\top \mathbf{B}_1
+
\mathbf{B}_2\mathbf{B}_2^\top.
\)
Together, the Lagrangian balances smooth temporal evolution with global topological coherence of the directed interaction flow.

\paragraph{Action.}
Given the Lagrangian \(\mathcal{L}(X,\dot X)\), the \emph{action} of an edge-flow trajectory \(X(t)\) is defined as \citep{arnold.2013}
\(
\mathcal{A}[X]
=
\int_0^\infty \mathcal{L}(X(t),\dot X(t))\,dt.
\)
The action assigns a scalar cost to the full temporal evolution of the interaction flow, balancing dynamical smoothness with topological coherence. In conservative systems, admissible trajectories are stationary points of the action. Using variational notation \(\delta\) \citep{arnold.2013}, this condition is written as
\(
\delta \mathcal{A}[X] = 0.
\)
The resulting stationary trajectories satisfy the Euler--Lagrange equation governing the temporal evolution of the interaction flow. For the second time derivative
\( \ddot X(t)=\frac{d^2}{dt^2}X(t) \), we have:

\begin{theorem}[Euler--Lagrange dynamics of interaction flows]
\label{thm:ELD}
An edge-flow trajectory \(X(t)\) is a stationary point of the action functional \(\mathcal{A}[X]\) if and only if it satisfies
\(
\ddot X(t) + \Delta_1 X(t) = 0.
\)
\end{theorem}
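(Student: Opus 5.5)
The argument is the standard first-variation computation for the action functional $\mathcal{A}[X]$. Since $\mathcal{A}$ is an integral over $[0,\infty)$, I will make stationarity precise by perturbing a trajectory $X$ with a compactly supported variation: take $\eta\in C_c^\infty((0,\infty);\mathbb{R}^{|E|})$ and consider $X+\varepsilon\eta$. Because $\eta$ has compact support, the difference $\mathcal{A}[X+\varepsilon\eta]-\mathcal{A}[X]$ is a finite integral even when $\mathcal{A}[X]$ itself is not. I therefore define $\delta\mathcal{A}[X]=0$ to mean
\[
\left.\frac{d}{d\varepsilon}\right|_{\varepsilon=0}\big(\mathcal{A}[X+\varepsilon\eta]-\mathcal{A}[X]\big)=0
\]
for all such $\eta$. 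I will assume $X$ is $C^2$.

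\textbf{First variation.} Since $\mathcal{L}$ is quadratic, the expansion is exact. Using that $\Delta_1=\mathbf{B}_1^\top\mathbf{B}_1+\mathbf{B}_2\mathbf{B}_2^\top$ is symmetric, the cross term of $\frac12(X+\varepsilon\eta)^\top\Delta_1(X+\varepsilon\eta)$ equals $\varepsilon\,\eta^\top\Delta_1X$. This gives
\[
\left.\frac{d}{d\varepsilon}\right|_{\varepsilon=0}\mathcal{A}[X+\varepsilon\eta]=\int_0^\infty\big(\dot X^\top\dot\eta-\eta^\top\Delta_1X\big)\,dt.
\]

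\textbf{Integration by parts.} Integrating the first term by parts, the boundary terms vanish because $\eta$ has compact support in $(0,\infty)$. The variation becomes
\[
-\int_0^\infty \eta(t)^\top\big(\ddot X(t)+\Delta_1X(t)\big)\,dt .
\]

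\textbf{Both directions.} If $\ddot X+\Delta_1X=0$, this integral vanishes for every $\eta$, so $X$ is stationary. Conversely, if it vanishes for all $\eta$, the fundamental lemma of the calculus of variations, applied componentwise to the continuous function $\ddot X+\Delta_1 X$, forces $\ddot X+\Delta_1X\equiv0$.

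\textbf{Main obstacle.} The only delicate point is giving $\delta\mathcal{A}=0$ a rigorous meaning on the infinite horizon, where $\mathcal{A}[X]$ may diverge. For instance, harmonic components of $X$ are constant in time and contribute zero energy, but oscillatory modes contribute non-integrable terms. Restricting to compactly supported variations and differentiating the finite difference of actions resolves this. After that, the algebra is routine, using only the symmetry of $\Delta_1$.
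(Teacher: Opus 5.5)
Your proposal is correct and follows essentially the same route as the paper. Both expand the quadratic action and use the symmetry of $\Delta_1$ to obtain $\int_0^\infty(\dot X^\top\dot\eta-\eta^\top\Delta_1X)\,dt$, integrate by parts with vanishing boundary terms, and conclude via the fundamental lemma. Your choice of variations $\eta\in C_c^\infty((0,\infty);\mathbb{R}^{|E|})$ and your definition of stationarity through the difference of actions are somewhat more careful than the paper's version, which takes $\eta$ supported in $(0,1)$, does not address possible divergence of $\mathcal{A}[X]$ on the infinite horizon, and does not argue the converse direction explicitly.
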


The Euler--Lagrange equation is classical in mechanical systems; here we extended to directed interaction flows on a simplicial complex. The proof is given in Appendix. The equation describes coupled harmonic oscillators \citep{arnold.2013} on edges, with coupling determined by the topology encoded in the \(1\)-Hodge Laplacian \(\Delta_1\). Because the dynamics are conservative, the resulting trajectories are generally oscillatory and need not converge to stable configurations. Thus, conservative variational dynamics alone are insufficient for isolating persistent cyclic organization.

\paragraph{Dissipative dynamics.}
To obtain stable interaction patterns, we extend the conservative Lagrangian framework by incorporating the Rayleigh dissipation functional \citep{goldstein.1950}
\(
\mathcal{R}(\dot X)
=
\frac{\gamma}{2}\|\dot X\|_2^2,
\)
which penalizes rapid temporal variation of edge flows. The associated dissipative force is
\(
F
=
-\frac{\partial \mathcal{R}}{\partial \dot X}
=
-\gamma \dot X,
\)
so larger temporal velocities experience stronger damping. The evolution is governed by the Lagrange--d'Alembert principle \citep{arnold.2013,goldstein.1950},
\bqn
\delta \mathcal{A}[X]
=
-\int_0^\infty \langle F,\delta X\rangle dt,
\label{eq:LdAmain}
\eqn
which balances conservative variational structure with dissipation.

When \(\gamma=0\), this principle reduces to the Euler--Lagrange equation. For \(\gamma>0\), dissipation suppresses rapidly fluctuating interaction components and favors slowly varying, structurally consistent trajectories. Thus, dissipation acts as an intrinsic regularizer on the space of edge-flow trajectories.

\begin{theorem}[Dissipative dynamics of interaction flows]
\label{thm:damped_EL_edgeflow}
An edge-flow trajectory \(X(t)\) satisfies the Lagrange--d'Alembert principle if and only if it satisfies the damped Euler--Lagrange equation
\bqn
\ddot X+\gamma \dot X+\Delta_1 X =0,
\label{eq:full}
\eqn
where \(\gamma>0\) controls the rate at which transient interaction components are dissipated.
\end{theorem}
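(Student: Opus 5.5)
The plan is to compute the first variation of the action directly and compare it with the right-hand side of the Lagrange--d'Alembert principle~\eqref{eq:LdAmain}. As in Theorem~\ref{thm:ELD}, we take variations $\delta X(t)=\epsilon\,\eta(t)$ with $\eta$ smooth and compactly supported in $(0,\infty)$, so that boundary terms vanish. The first step is to expand
\[
\mathcal{A}[X+\epsilon\eta]=\int_0^\infty \tfrac12\|\dot X+\epsilon\dot\eta\|_2^2-\tfrac12 (X+\epsilon\eta)^\top\Delta_1 (X+\epsilon\eta)\,dt
\]
and differentiate at $\epsilon=0$. Because $\Delta_1=\mathbf{B}_1^\top\mathbf{B}_1+\mathbf{B}_2\mathbf{B}_2^\top$ is symmetric, this gives $\delta\mathcal{A}=\int_0^\infty \langle \dot X,\dot\eta\rangle-\langle \Delta_1 X,\eta\rangle\,dt$.

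The second step integrates the kinetic term by parts. Compact support of $\eta$ kills the boundary terms, so $\delta\mathcal{A}=-\int_0^\infty\langle \ddot X+\Delta_1 X,\eta\rangle\,dt$. This is exactly the computation behind Theorem~\ref{thm:ELD}, which we may reuse.

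The third step substitutes $F=-\gamma\dot X$ into~\eqref{eq:LdAmain}. The right-hand side becomes $\int_0^\infty\langle\gamma\dot X,\eta\rangle\,dt$. Equating the two sides gives
\[
\int_0^\infty\langle \ddot X+\gamma\dot X+\Delta_1X,\eta\rangle\,dt=0
\]
for all admissible $\eta$.

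For the ``if'' direction, \eqref{eq:full} makes the integrand vanish identically, so~\eqref{eq:LdAmain} holds. For the ``only if'' direction, we apply the fundamental lemma of the calculus of variations componentwise on $\mathbb{R}^{|E|}$. Choose $\eta=\phi(t)e_k$, with $e_k$ a standard basis vector and $\phi$ any smooth bump function. This gives $\int\phi\,(\ddot X+\gamma\dot X+\Delta_1X)_k\,dt=0$ for every $\phi$, hence~\eqref{eq:full} holds pointwise, assuming $X\in C^2$.

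The main obstacle is the sign convention in~\eqref{eq:LdAmain}. With $\mathcal{L}=\frac12\|\dot X\|^2-\mathcal{E}$, the variation is $\delta\mathcal{A}=-\int\langle\ddot X+\Delta_1X,\delta X\rangle$. For the stated right-hand side $-\int\langle F,\delta X\rangle=\int\langle\gamma\dot X,\delta X\rangle$ to produce $+\gamma\dot X$, the signs must combine to $\ddot X+\Delta_1X+\gamma\dot X=0$. The computation above shows they do: $-(\ddot X+\Delta_1X)=\gamma\dot X$. If the computation produced the opposite sign on the damping term, the principle as written would describe an anti-damped system rather than~\eqref{eq:full}, so this sign should be checked explicitly.

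A secondary issue is the infinite horizon, since the action need not converge. We will state explicitly that only compactly supported variations are used, so every integral involved is over a compact interval and is finite.
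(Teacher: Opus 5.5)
Your proposal is correct and follows essentially the same route as the paper. Both reuse the first variation $\delta\mathcal{A}=-\int_0^\infty\langle\ddot X+\Delta_1X,\eta\rangle\,dt$ from Theorem 1, substitute $F=-\gamma\dot X$ into the Lagrange--d'Alembert identity, combine to $\int_0^\infty\langle\ddot X+\gamma\dot X+\Delta_1X,\eta\rangle\,dt=0$, and conclude from the arbitrariness of $\eta$, with the converse obtained by reversing the steps. Your explicit choices tighten points the paper leaves informal: variations compactly supported in $(0,\infty)$ rather than the paper's $(0,1)$, which alone would give the equation only on that interval; the componentwise bump-function argument under a $C^2$ assumption; and the explicit sign check.
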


The proof is given in Appendix. The damped Euler--Lagrange equation defines coupled damped harmonic oscillators on edges, with coupling determined by the \(1\)-Hodge Laplacian \(\Delta_1\). This edge-based formulation describes how directed interactions propagate and stabilize through cycles. In contrast, using the \(0\)-Hodge Laplacian \(\Delta_0\), the standard graph Laplacian, yields coupled damped oscillators on nodes. Node-based oscillator models are widely used for synchronization and consensus dynamics \citep{olfati.2007,arenas.2008,mesbahi.2010}, but they describe state alignment rather than edge-flow circulation and therefore do not represent persistent cyclic interaction patterns.

\paragraph{Emergence of stable cyclic interactions.}

The damped Euler--Lagrange dynamics~\eqref{eq:full} exhibit stable long-time behavior. For \(\gamma\gg 1\), damping suppresses oscillatory transients and the flow relaxes toward a topology-constrained configuration determined by \(\Delta_1\). In neuronal systems, such stabilization is analogous to inhibitory mechanisms that regulate recurrent excitation and prevent runaway dynamics \citep{dayan.2005,van.1996}. Consequently, persistent cyclic organization emerges as a stable energy-consistent flow rather than a collection of transient interactions.

To characterize the asymptotic regime, introduce the slow time \(\tau=t/\gamma\), so that
\[
\frac{d}{dt}
=
\frac{1}{\gamma}\frac{d}{d\tau},
\qquad
\frac{d^2}{dt^2}
=
\frac{1}{\gamma^2}\frac{d^2}{d\tau^2}.
\]
Equation~\eqref{eq:full} becomes
\[
\frac{1}{\gamma^2}\frac{d^2X}{d\tau^2}
+
\frac{dX}{d\tau}
+
\Delta_1X
=
0.
\]
In the strongly damped regime, the inertial term is \(\mathcal{O}(\gamma^{-2})\) and negligible, yielding the first-order approximation
\(
\dot X(t)
=
-\frac{1}{\gamma}\Delta_1X(t).
\)
After rescaling time, this reduces to
\bqn
\dot X(t)
=
-\Delta_1X(t),
\label{eq:diffusionDH}
\eqn
which we refer to as the \emph{Dirichlet--Hodge diffusion}.

The Dirichlet--Hodge diffusion defines a purely dissipative evolution that monotonically decreases the Dirichlet energy and relaxes the system toward a stable configuration. Unlike conservative dynamics, which may sustain oscillations, the diffusive limit suppresses inertial effects and guarantees convergence to the harmonic component \citep{anand.2024.ISBI}. The initial interaction flow is progressively regularized by the topology encoded in \(\Delta_1\), so that transient components dissipate while only globally consistent cyclic organization persists in the long-time limit.

\paragraph{Harmonic flows.}

In the strongly dissipative regime, inertial effects vanish and the evolution is governed solely by the Dirichlet energy
\(
\mathcal{E}(X)=\frac12 X^\top \Delta_1 X.
\)
The system therefore evolves by steepest descent of \(\mathcal{E}\) in the edge-flow space. By Onsager's variational principle \citep{noirhomme.2024}, the overdamped limit of the dissipative Lagrangian dynamics yields precisely the Dirichlet--Hodge diffusion introduced above. Along this evolution, the Dirichlet energy serves as a Lyapunov function \citep{ambrosio.2005}. Differentiating along trajectories gives
\(
\frac{d}{dt}\mathcal{E}(X(t))
=
-\|\Delta_1X(t)\|_2^2
\le 0,
\)
so the energy decreases monotonically in time. Consequently, all components with positive Dirichlet energy decay, and the trajectory converges to the null space
\(
\ker(\Delta_1)
=
\{X\in\mathbb{R}^{|E|}:\Delta_1X=0\}.
\)
The limiting configuration is therefore the projection of the initial flow onto \(\ker(\Delta_1)\), consisting entirely of non-dissipative modes. These persistent modes define the \emph{harmonic flow}, which emerges intrinsically as the stable cyclic organization of the interaction flow (Figure~\ref{fig:decom}).

\begin{figure}[t]
	\centering
	\includegraphics[width=1\linewidth]{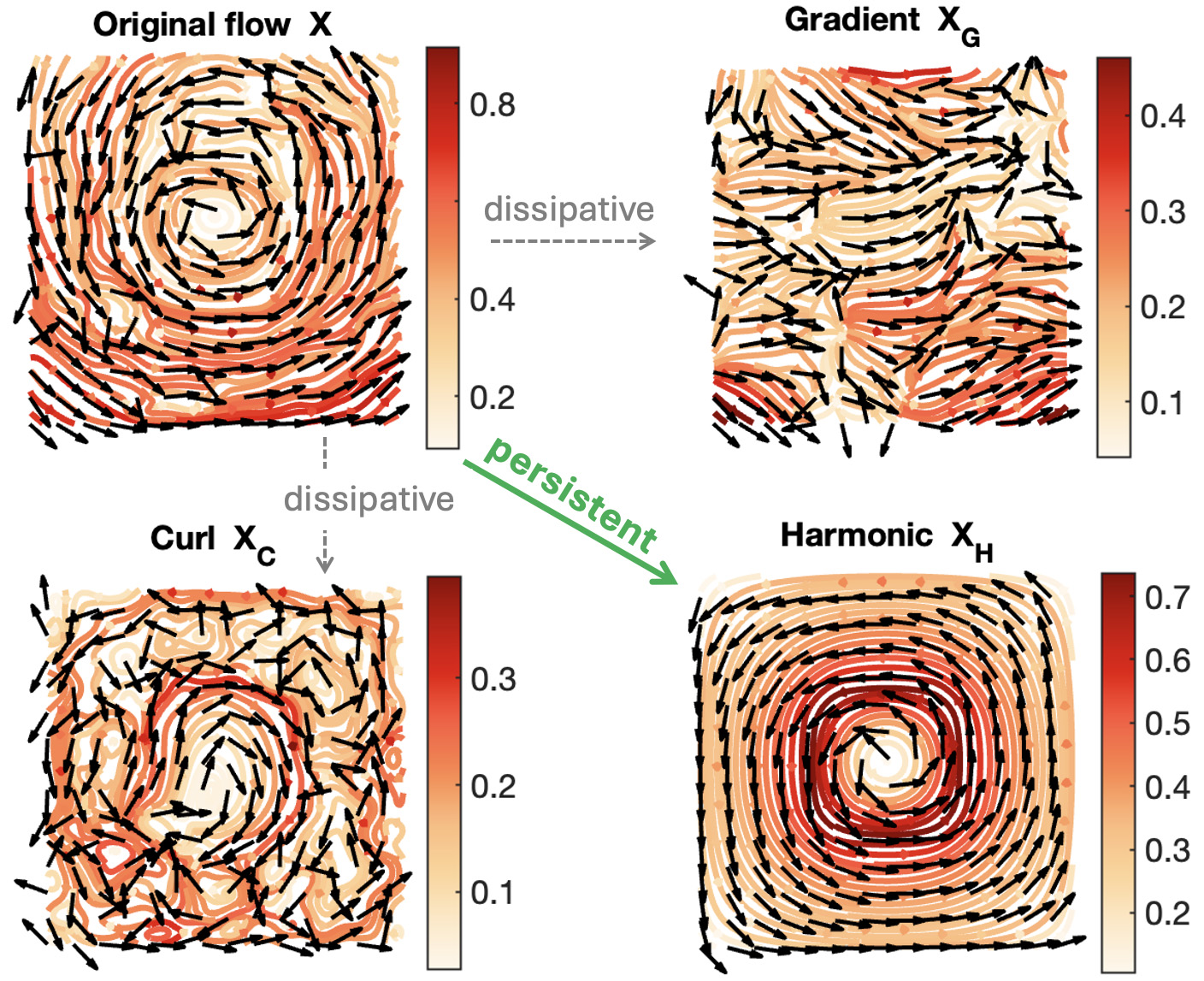}	
\caption{ 
The gradient and curl flows $X_G$ and $X_C$ correspond to dissipative structures of the original flow $X$ that decay under Dirichlet--Hodge diffusion. In contrast, the harmonic flow $X_H$ is non-dissipative and therefore persists under diffusion, encoding the globally consistent, topologically constrained circulation that remains after transient components dissipate.}
\label{fig:decom}
\end{figure}

\section{Vector Space of Cycles}

The central object of interest is the harmonic subspace $\ker(\Delta_1)$. We show that the long-time behavior of the Dirichlet--Hodge diffusion naturally leads to a vector-space representation of cyclic interactions in this space. 

\paragraph{Limit of Dirichlet--Hodge diffusion.}

Let \(\{\boldsymbol{\phi}_k\}\) be an orthonormal eigenbasis of the \(1\)-Hodge Laplacian \(\Delta_1\) with eigenvalues \(\lambda_k\ge 0\), satisfying
\(
\Delta_1 \boldsymbol{\phi}_k
=
\lambda_k \boldsymbol{\phi}_k .
\)
Expanding the initial interaction flow as
\[
X(0)
=
\sum_k \alpha_k \boldsymbol{\phi}_k,
\qquad
\alpha_k
=
\boldsymbol{\phi}_k^\top X(0),
\]
the solution of the Dirichlet--Hodge diffusion~\eqref{eq:diffusionDH} is
\[
X(t)
=
e^{-\Delta_1 t}X(0)
=
\sum_k
e^{-\lambda_k t}
\alpha_k \boldsymbol{\phi}_k .
\]
As \(t\to\infty\), all modes with \(\lambda_k>0\) decay exponentially, while modes with \(\lambda_k=0\) persist. The limiting flow is therefore
\(
X_H
=
\lim_{t\to\infty}X(t)
=
\sum_{\lambda_k=0}
\alpha_k\boldsymbol{\phi}_k.
\)
Thus, the long-time dynamics automatically filter out dissipative components and retain only the non-dissipative harmonic component.

The limiting harmonic flow can be written as
\(
X_H=\mathcal P_H X(0),
\)
where
\(
\mathcal P_H
=
\sum_{\lambda_k=0}
\boldsymbol{\phi}_k\boldsymbol{\phi}_k^\top
\)
is the orthogonal projection onto the null space
\(
\ker(\Delta_1)
=
\{X\in\mathbb R^{|E|}:\Delta_1X=0\}.
\)
The orthogonal operator \(\mathcal P_H\) extracts the persistent component of the interaction flow selected by the variational dynamics.

\paragraph{Harmonic subspace as a cycle space.}

The harmonic subspace \(\ker(\Delta_1)\) is a linear subspace of the edge-flow space. Hence, if \(X_1,X_2\in\ker(\Delta_1)\) are harmonic flows and \(a,b\in\mathbb{R}\), then
\(
\Delta_1(aX_1+bX_2)
=
a\Delta_1X_1+b\Delta_1X_2
=
0.
\)
Thus, any linear combination of harmonic flows is again harmonic. In particular, adding two admissible cyclic flows produces another admissible cyclic flow. Thus, averaging of cycles are well-defined operations that preserve cyclic consistency (Figure \ref{fig:addition}).

\begin{figure}[t]
	\centering
	\includegraphics[width=1\linewidth]{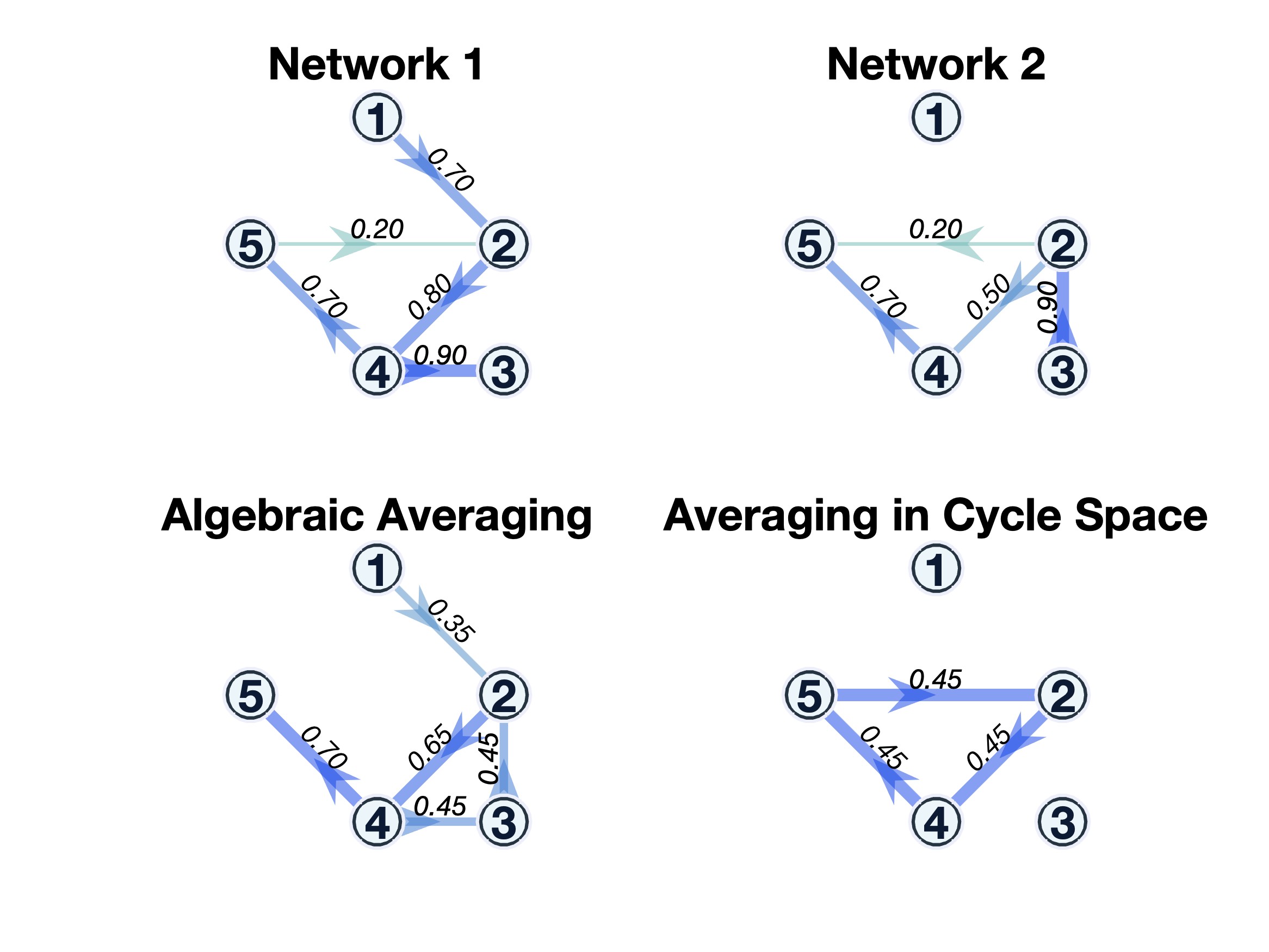}	
\caption{Two networks contain partially inconsistent cyclic structure involving \(2\to4\to5\). Direct algebraic averaging removes this cycle and instead introduces a spurious cycle \(2\to4\to3\) that is not present in either network. In contrast, averaging in the harmonic cycle space enhances the coherent recurrent cycle \(5\to2\to4\to5\).}
\label{fig:addition}
\end{figure}

In contrast, if \(X_H\in\ker(\Delta_1)\) is harmonic and \(X_D\notin\ker(\Delta_1)\) is non-harmonic, then \(X_H+X_D\) is generally non-harmonic because
\(
\Delta_1(X_H+X_D)
=
\Delta_1X_D
\neq 0.
\)
In the vector space  \(\ker(\Delta_1)\), we have well defined inner product between cycles:
$
\langle X_H,Y_H\rangle
=
X_H^\top Y_H,
$
which measures cyclic alignment and similarity between recurrent flow patterns.

Under Dirichlet--Hodge diffusion, this non-harmonic component dissipates, while the harmonic component remains invariant. Equivalently,
\(
\mathcal{P}_H(X_H+X_D)=X_H,
\)
provided \(X_D\) is orthogonal to \(\ker(\Delta_1)\). Thus, harmonic projection removes transient non-cyclic components and retains the admissible cyclic component.

\begin{theorem}[Harmonic flows are cycles]
\label{thm:harmonic_are_cycles}
Every nonzero harmonic flow \(X\in\ker(\Delta_1)\) represents a circulation around cycles of the simplicial complex.
\end{theorem}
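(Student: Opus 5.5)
The plan is to unpack $\ker(\Delta_1)$ into the two conditions $\mathbf{B}_1X=0$ and $\mathbf{B}_2^\top X=0$, and then read each condition combinatorially. The first condition says $X$ is a circulation, i.e.\ a cycle in the graph-theoretic sense. The second says it is not a boundary of filled faces, so it represents genuine cycles of the complex.

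The first step is positivity. For $X\in\ker(\Delta_1)$,
\[
0=X^\top\Delta_1X=\|\mathbf{B}_1X\|_2^2+\|\mathbf{B}_2^\top X\|_2^2 .
\]
Hence $\mathbf{B}_1X=0$ and $\mathbf{B}_2^\top X=0$. The converse inclusion is immediate from the definition of $\Delta_1$, so
\[
\ker(\Delta_1)=\ker(\mathbf{B}_1)\cap\ker(\mathbf{B}_2^\top).
\]

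The second step interprets $\mathbf{B}_1X=0$. Row $v$ of $\mathbf{B}_1X$ is the net outflow minus inflow at vertex $v$. So $X$ is divergence-free: flow is conserved at every node, with no sources or sinks. I then show that any nonzero $X\in\ker(\mathbf{B}_1)$ is a nonnegative combination of simple oriented cycles. I would prove this by a standard flow-decomposition argument, arguing by induction on the number of edges in the support:
\begin{enumerate}
\item Reorient the edges so that $X\ge 0$ on its support.
\item Conservation at each vertex means that any vertex with an incoming support edge also has an outgoing one.
\item Walking along support edges, finiteness forces the walk to revisit a vertex, producing a directed simple cycle $C$.
\item Subtract $c\,\chi_C$ with $c=\min_{e\in C}X_e$. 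The result is still in $\ker(\mathbf{B}_1)$ and has strictly smaller support, so the induction applies.
\end{enumerate}
This yields $X=\sum_j c_j\chi_{C_j}$ with $c_j>0$, which is the precise sense in which $X$ is circulation around cycles. Equivalently, $\ker(\mathbf{B}_1)$ is the cycle space $Z_1$ spanned by the indicator vectors of oriented cycles.

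The third step uses $\mathbf{B}_2^\top X=0$. This says $X\perp\operatorname{im}(\mathbf{B}_2)$, i.e.\ $X$ has zero net circulation around every filled triangle. Combined with step two, $X$ lies in $Z_1\cap B_1^{\perp}$. Since $\mathbf{B}_1\mathbf{B}_2=0$, we have $B_1=\operatorname{im}\mathbf{B}_2\subseteq Z_1$. This gives the orthogonal decomposition $Z_1=B_1\oplus(Z_1\cap B_1^\perp)$, so $\ker(\Delta_1)\cong Z_1/B_1=H_1$. A nonzero harmonic $X$ therefore cannot be a boundary of filled faces, and represents a nontrivial homology class, i.e.\ circulation around a cycle that is not filled in.

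The main obstacle is making the informal phrase ``represents a circulation around cycles'' precise. The decomposition in step two must be stated carefully in the real-valued (weighted) setting, including the sign reorientation and termination of the induction. I would present the theorem as the conjunction of two facts: $X$ is a positive combination of oriented cycle indicators, and $X\neq0$ in $Z_1/B_1$.
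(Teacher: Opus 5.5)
Your proposal is correct and follows the same route as the paper. The identity $X^\top\Delta_1X=\|\mathbf{B}_1X\|_2^2+\|\mathbf{B}_2^\top X\|_2^2$ gives $\ker(\Delta_1)=\ker(\mathbf{B}_1)\cap\ker(\mathbf{B}_2^\top)$, and each condition is then read as conservation at vertices and orthogonality to boundaries of filled faces. The paper only asserts that $\mathbf{B}_1X=0$ ``forms a closed circulation'' and that $\mathbf{B}_2^\top X=0$ ``removes'' boundary components. You actually prove both, via the conformal decomposition $X=\sum_j c_j\chi_{C_j}$ and the orthogonal splitting $\ker(\mathbf{B}_1)=\operatorname{im}(\mathbf{B}_2)\oplus\bigl(\ker(\mathbf{B}_1)\cap\ker(\mathbf{B}_2^\top)\bigr)$. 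This also yields $\ker(\Delta_1)\cong H_1$, which justifies the paper's later claim $\dim\ker(\Delta_1)=\beta_1$. Since $[X]=\sum_j c_j[C_j]\neq 0$, it also makes precise the paper's closing remark that at least one cyclic component is nontrivial.
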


The proof is given in Appendix. Consequently, the harmonic subspace provides a vector-space representation of cyclic interactions: cycles can be added, scaled, averaged, and projected while remaining inside the same admissible cycle space. The dimension of the harmonic subspace is
\(
\dim\ker(\Delta_1)=\beta_1,
\)
where \(\beta_1\) is the first Betti number of the simplicial complex \citep{edelsbrunner.2010}. Therefore, \(\ker(\Delta_1)\) is spanned by \(\beta_1\) independent cycle representatives, and every harmonic flow is a linear combination of these cycles.

\paragraph{Numerical implementation.}
\label{app:numerical}

The boundary matrices $\mathbf{B}_1$ and $\mathbf{B}_2$ are highly sparse, with at most two nonzero entries per column in $\mathbf{B}_1$ and three per column in $\mathbf{B}_2$. The harmonic projection does not require explicit eigendecomposition of the Hodge Laplacian, which can be computationally expensive for large complexes.
 Instead, it is computed via a constrained least-squares formulation, reducing the problem to solving a sparse linear system.
 
Given an edge flow \( X \in \mathcal{C}^1 \), we aim to extract its harmonic component \( X_H \in \ker(\mathbf{B}_1) \cap \ker(\mathbf{B}_2^\top) \), which lies in the kernel of the Hodge 1-Laplacian. Thus, the harmonic component is the unique solution to the orthogonal projection problem:
\[
X_H = \arg\min_{Z \in \mathbb{R}^{|E|}} \| X - Z \|^2  \text{ under }  \mathbf{B}_1 Z = 0,  \mathbf{B}_2^\top Z = 0.
\]
This is a least-squares estimation (LSE) problem with linear equality constraints, projecting \( X \) onto the harmonic subspace.  Let \( \mathbf{B} \) denote the stacked boundary and coboundary matrix:
$
\mathbf{B}
=
\left[\mathbf{B}_1^\top\ \mathbf{B}_2\right]^\top.
$
The orthogonal projection onto \( \ker(\mathbf{B}) \) is given by
$
\mathcal{P}_H = I - \mathbf{B}^\top (\mathbf{B} \mathbf{B}^\top)^\dagger \mathbf{B},
$
where \( (\cdot)^\dagger \) denotes the Moore–Penrose pseudoinverse. The harmonic component is then computed as
$
X_H = \mathcal{P}_H X.
$

\paragraph{Variance reduction properties.} 
\label{app:variance_isotropic} Harmonic projection improves statistical stability by removing disspiant components and isolating globally persistent cyclic structure. As a result, statistical inference is conducted in a variance-reduced subspace. To quantify this effect, let $X \in \mathbb{R}^{|E|}$ be a random edge flow with covariance $\Sigma_X$. We define the total variance
\[
\operatorname{Var}(X)
=
\mathbb{E}\|X-\mathbb{E}X\|_2^2
=
\operatorname{tr}(\Sigma_X).
\]

\begin{theorem}[Variance reduction]
If $\Sigma_X = \sigma^2 I$, then for harmonic flow $X_H = \mathcal{P}_H X$,
\[
\operatorname{Var}(X_H)
=
\frac{\beta_1}{|E|}
\operatorname{Var}(X),
\]
where $\beta_1 = \dim \ker(\Delta_1)$ is the first Betti number.
\end{theorem}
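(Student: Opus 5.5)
The plan is to compute the covariance of $X_H$ directly from that of $X$ and take its trace. Since $\mathcal P_H$ is a fixed (deterministic) linear map, linearity of expectation gives
\[
\mathbb{E}X_H=\mathcal P_H\,\mathbb{E}X .
\]
Hence
\[
X_H-\mathbb{E}X_H=\mathcal P_H(X-\mathbb{E}X),
\qquad
\Sigma_{X_H}=\mathcal P_H\Sigma_X\mathcal P_H^\top .
\]
With the definition $\operatorname{Var}(X_H)=\operatorname{tr}(\Sigma_{X_H})$, the whole statement reduces to evaluating $\operatorname{tr}(\mathcal P_H\Sigma_X\mathcal P_H^\top)$ under the isotropy assumption.

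Next I use the structure of $\mathcal P_H$ from the limit of Dirichlet--Hodge diffusion, namely
\[
\mathcal P_H=\sum_{\lambda_k=0}\boldsymbol\phi_k\boldsymbol\phi_k^\top .
\]
Because $\{\boldsymbol\phi_k\}$ is orthonormal, $\mathcal P_H$ is symmetric and idempotent: $\mathcal P_H^\top=\mathcal P_H$ and $\mathcal P_H^2=\mathcal P_H$. Substituting $\Sigma_X=\sigma^2 I$ gives
\[
\Sigma_{X_H}=\sigma^2\mathcal P_H^2=\sigma^2\mathcal P_H .
\]
The trace of an orthogonal projection equals the dimension of its range. Concretely,
\[
\operatorname{tr}(\mathcal P_H)=\sum_{\lambda_k=0}\boldsymbol\phi_k^\top\boldsymbol\phi_k=\#\{k:\lambda_k=0\}=\dim\ker(\Delta_1).
\]
The paper identifies this dimension with $\beta_1$, the first Betti number. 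Therefore $\operatorname{Var}(X_H)=\sigma^2\beta_1$.

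Finally, $\operatorname{Var}(X)=\operatorname{tr}(\sigma^2 I_{|E|})=\sigma^2|E|$, so $\sigma^2=\operatorname{Var}(X)/|E|$. Substituting this into $\operatorname{Var}(X_H)=\sigma^2\beta_1$ gives the stated ratio $\beta_1/|E|$.

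There is no analytic obstacle; the computation is a few lines. The only step that rests on outside input is the identification $\dim\ker(\Delta_1)=\beta_1$, which I take from the discrete Hodge theorem as cited in the paper. If a self-contained check were wanted, I would argue as follows. First, $\ker\Delta_1=\ker\mathbf B_1\cap\ker\mathbf B_2^\top$, which follows from
\[
X^\top\Delta_1X=\|\mathbf B_1X\|^2+\|\mathbf B_2^\top X\|^2 .
\]
Second, $\mathbf B_1\mathbf B_2=0$, so $\operatorname{im}\mathbf B_2\subseteq\ker\mathbf B_1$. Third, $\ker\mathbf B_2^\top$ is the orthogonal complement of $\operatorname{im}\mathbf B_2$. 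Together these give
\[
\ker\mathbf B_1\cap(\operatorname{im}\mathbf B_2)^\perp\cong\ker\mathbf B_1/\operatorname{im}\mathbf B_2 ,
\]
and the right-hand side has dimension $\beta_1$.

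If the theorem is instead meant with $\mathcal P_H$ in the pseudoinverse form $I-\mathbf B^\top(\mathbf B\mathbf B^\top)^\dagger\mathbf B$, I would first note that this is also the orthogonal projector onto $\ker\mathbf B=\ker\Delta_1$. It is symmetric and idempotent by the Moore--Penrose identities, so the same trace argument applies unchanged.
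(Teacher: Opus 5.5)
Your proposal is correct and follows the paper's proof essentially step for step: $\Sigma_{X_H}=\mathcal{P}_H\Sigma_X\mathcal{P}_H=\sigma^2\mathcal{P}_H$, then $\operatorname{tr}(\mathcal{P}_H)=\operatorname{rank}(\mathcal{P}_H)=\beta_1$, combined with $\operatorname{Var}(X)=\sigma^2|E|$. Your added check that $\dim\ker(\Delta_1)$ equals the homological $\beta_1$ (via $\mathbf{B}_1\mathbf{B}_2=0$ and $\ker\mathbf{B}_1\cap(\operatorname{im}\mathbf{B}_2)^\perp\cong\ker\mathbf{B}_1/\operatorname{im}\mathbf{B}_2$) is also correct; the paper only asserts this identification.
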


The proof is given in Appendix. Since typically $\beta_1 \ll |E|$, harmonic projection contracts noise by a factor $\beta_1/|E|$, concentrating variability into a low-dimensional topologically constrained subspace. In our human neural network application, this ratio is approximately $534/6670 \approx 0.08$, explaining the empirical stability of the population-level harmonic flow and enabling reliable group-level inference.

\paragraph{Inference in cycle space.}

For subject \(s\), let
\[
X_H^{(s)}
=
\mathcal{P}_H X^{(s)}
\in
\ker(\Delta_1)
\]
denote the subject-level harmonic flow. The population mean is
$
\mu=\mathbb{E}[X_H^{(s)}],
$
and we test the null hypothesis
$
H_0:\ \mu=0,
$
which asserts the absence of systematic cyclic organization across subjects. Rejection of \(H_0\) indicates population-level alignment of harmonic flows beyond random fluctuation.

Under \(H_0\), subjects may individually exhibit nonzero harmonic flows, but their directions fluctuate independently and do not align coherently. We therefore focus on directional alignment rather than magnitude. Define the normalized harmonic flow
\[
U^{(s)}
=
\frac{X_H^{(s)}}{\|X_H^{(s)}\|_2}
\in
\mathbb{S}_H^{\beta_1-1},
\]
which lies on the harmonic unit sphere
\[
\mathbb{S}_H^{\beta_1-1}
=
\Bigl\{
U\in\ker(\Delta_1):
\|U\|_2=1
\Bigr\}.
\]
Population-level cyclic coherence corresponds to concentration of the directions \(U^{(s)}\) around a common orientation in the harmonic cycle space (Figure \ref{fig:hyperspherebeta}).

Alignment across subjects is quantified by the geodesic distance induced by cosine similarity \citep{luo.2018,zhao.2025}:
\[
\theta_{st}
=
\cos^{-1}
\!\bigl(
\langle U^{(s)},U^{(t)}\rangle
\bigr),
\]
which measures the angular separation between subjects \(s\) and \(t\). Because cosine similarity depends only on orientation, it is invariant to scale and measures directional agreement independently of magnitude.

The sample mean direction is
\[
\bar U
=
\frac{1}{n}
\sum_{s=1}^{n}
U^{(s)}.
\]
Its norm \(\|\bar U\|_2\) measures directional concentration across subjects: random orientations cancel so that \(\bar U\) is close to zero, whereas coherent cyclic organization yields larger \(\|\bar U\|_2\).

\begin{figure}[t]
\centering
\includegraphics[width=0.7\linewidth]{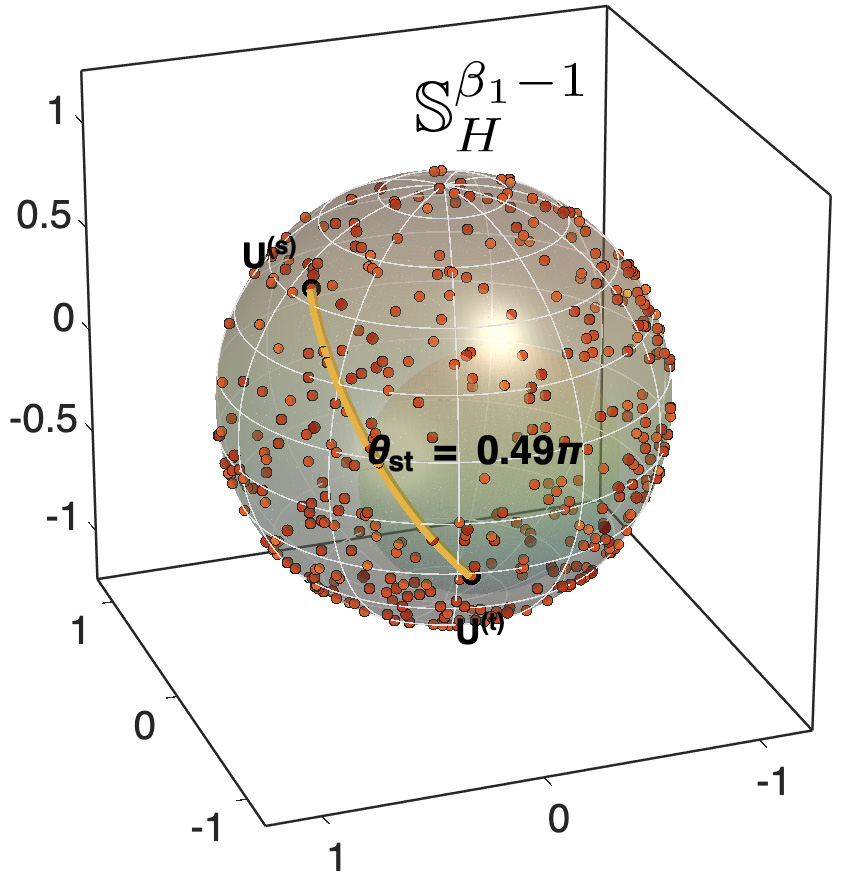}
\caption{
Normalized harmonic flows from 400 subjects lie on \(\mathbb{S}_H^{\beta_1-1}\) and are embedded onto \(S^2\) using spherical MDS for visualization. The arc shows the geodesic angle \(\theta_{st}\), where
\(
\cos \theta_{st}
=
\langle U^{(s)},U^{(t)}\rangle
\).
}
\label{fig:hyperspherebeta}
\end{figure}

Inference is performed using the Rayleigh test for mean direction \citep{mardia.2009}. Since the harmonic flows are restricted to the \(\beta_1\)-dimensional cycle space \(\ker(\Delta_1)\), the relevant dimension is \(\beta_1\), not the ambient edge dimension \(|E|\). Under \(H_0\), the normalized directions are uniformly distributed on \(\mathbb{S}_H^{\beta_1-1}\). By rotational symmetry,
\[
\mathbb{E}[U^{(s)}]
=
0,
\qquad
\mathrm{Cov}(U^{(s)})
=
\frac{1}{\beta_1}
I_{\beta_1}.
\]

By the multivariate central limit theorem \citep{casella.2024},
$
\sqrt{n}\,\bar U
\ \xrightarrow{d}\
N\!\left(
0,
\frac{1}{\beta_1}I_{\beta_1}
\right).
$
Therefore,
$
\beta_1 n \|\bar U\|_2^2
\ \xrightarrow{d}\
\chi^2_{\beta_1}.
$
The corresponding \(p\)-value is
$
p
=
\mathbb{P}
\!\left(
\chi^2_{\beta_1}
\ge
\beta_1 n \|\bar U\|_2^2
\right).
$
A significant result indicates reproducible cyclic organization across subjects in the harmonic cycle space.

\section{Validation}
\label{sec:com}

Our validation focuses on recovering cyclic interactions, the inferential target of this paper. Accordingly, the synthetic studies evaluate recovery of cyclic ground truth. We compare the proposed framework against six representative directed-interaction baselines spanning temporal prediction, instantaneous structural modeling, information-theoretic dependence, lagged correlation, DAG-constrained optimization, and cyclic constraint-based discovery: Granger causality \citep{granger.1969}, SEM \citep{driver.2017}, Bayesian MAP SVAR \citep{lutkepohl.2005,kilian.2017}, transfer entropy \citep{schreiber.2000}, lagged Pearson correlation \citep{smith.2011,keilholz.2017}, NOTEARS \citep{zheng.2018}, and CCD \citep{richardson.1996}. 


\paragraph{Ground Truth.}
We generate cyclic ground truth using random cubical complexes \citep{chungYM.2024}, which provide directed edge flows with known recurrent structure for evaluating cyclic recovery (Fig.~\ref{fig:simulation-circulation}, left). We construct a directed \(1\)-skeleton on a \(5\times5\) cubical grid, where nodes correspond to lattice vertices and edges connect horizontally and vertically adjacent vertices. Directed interactions are assigned along the boundaries of selected square faces to induce coherent circulations, producing a nontrivial harmonic subspace. Clockwise (red) and counterclockwise (blue) circulations define the embedded ground-truth cyclic interactions, providing known cycle orientations and strengths against which estimated harmonic flows can be quantitatively compared.

Time series data at each node are  simulated so that their dependencies follow the prescribed ground-truth edge flows. We then generate a vector autoregressive (VAR) process of order $L$ \citep{gong.2024.causal,papana.2017,primiceri.2005}:
\[
Z_t = \sum_{\ell=1}^L A_\ell Z_{t-\ell} + \varepsilon_t, 
\qquad 
\varepsilon_t \sim N(0, \sigma^2 I),
\]
where $Z_t \in \mathbb{R}^{|V|}$ represents node-level activity at time $t$.
The lag matrices $A_\ell$ are constructed directly from the ground-truth edge-flow matrix $W=(w_{ij})$, where $w_{ij}$ encodes the directed interaction strength from node $i$ to node $j$. For each lag,
\(
A_\ell 
=
\rho I
+
c\,\delta^{\,\ell-1} W,
\)
where $\rho = 0.1$ controls self-retention (the extent to which each node preserves its own past activity), $c = 0.1$ scales interaction strength along directed edges, and $\delta = 0.5$ imposes geometric decay across higher-order lags. For each simulation, $200$ time points are generated. A relatively long lag order ($L=10$) disperses the direct cyclic influence across time, blurring immediate feedback structure and making recovery of the underlying cyclic interactions substantially more challenging.

\begin{figure}[t]
	\centering
	\includegraphics[width=1\linewidth]{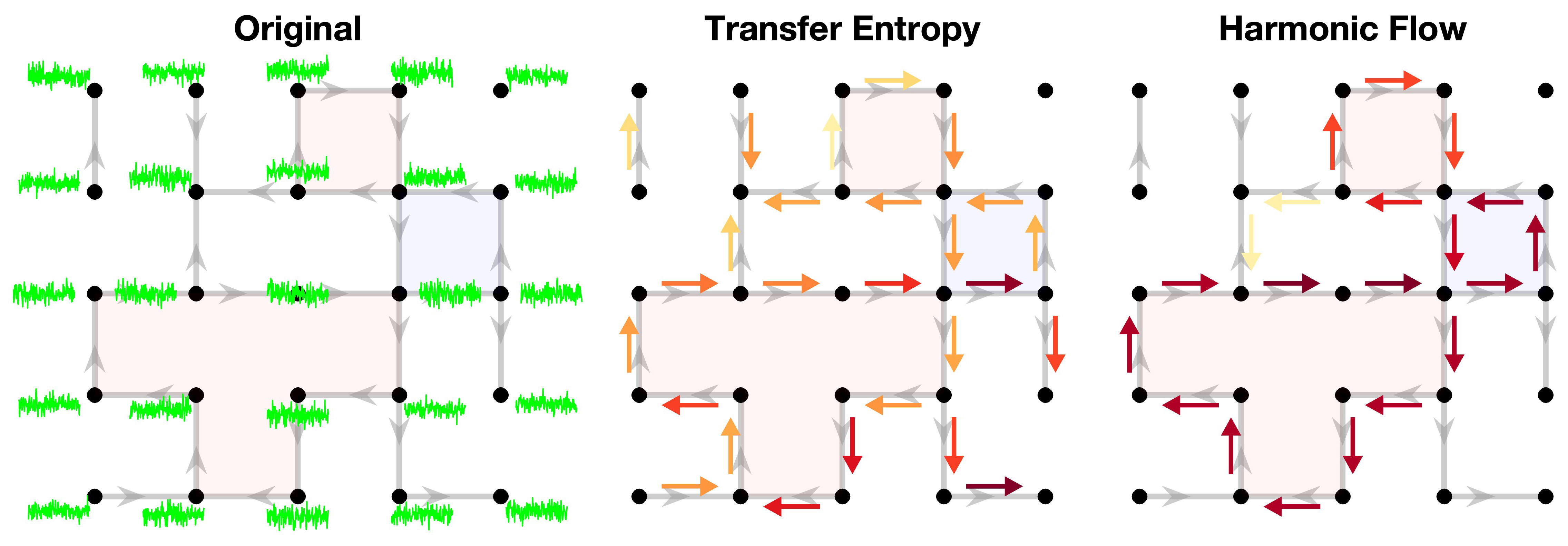}	
\caption{
{\bf Left:} Ground-truth directed edge flow on a cubical complex, consisting of two clockwise (red) and one counterclockwise (blue) circulations. Node time series are generated from a VAR-process driven by these prescribed edge flows. 
{\bf Middle:} Directed interactions estimated using transfer entropy (colored arrows). Spurious signals appear along non-cyclic edges, obscuring the underlying cyclic organization. 
{\bf Right:} Harmonic flow obtained via the Hodge projection of the transfer entropy. Color scales differ across panels because harmonic flow is the orthogonal projection of transfer entropy onto the cycle space and therefore has smaller magnitude.
\label{fig:simulation-circulation}}
	\includegraphics[width=1\linewidth]{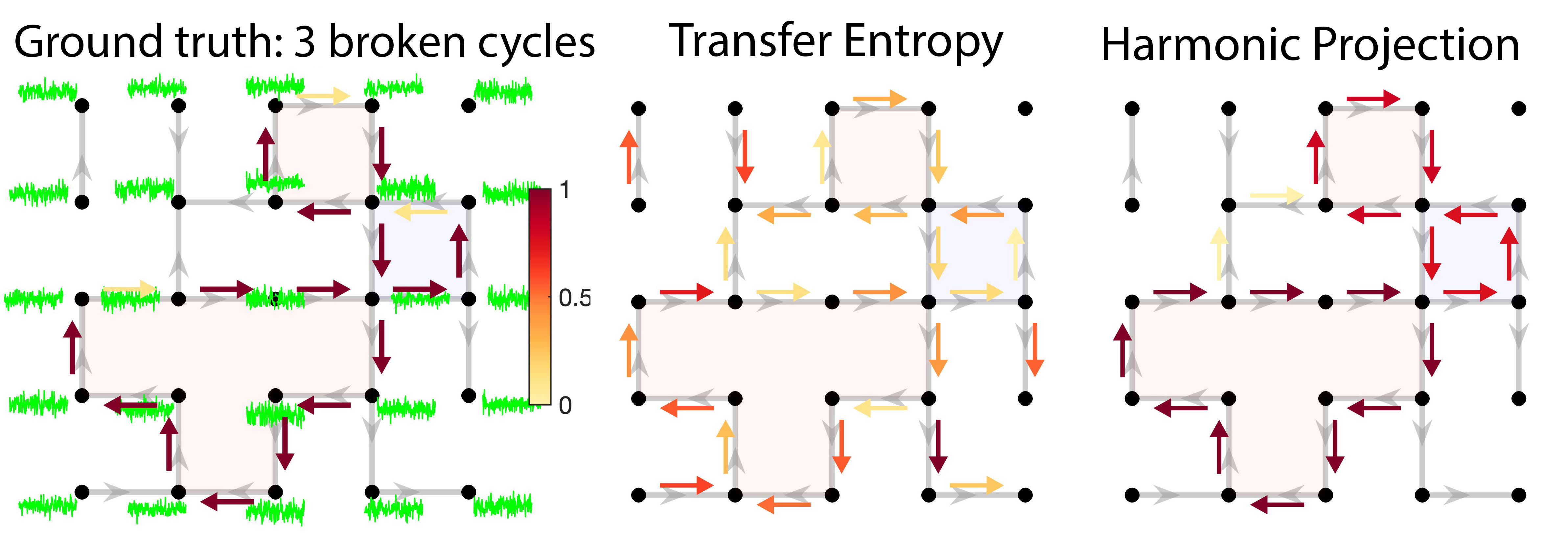}	
\caption{Simulation setting differ from Figure \ref{fig:simulation-circulation} example by 
introducing broken edges in cycles. One edge per cycle is weakened below the noise level, masking cyclic structure in the observations. Existing methods, including transfer entropy, fail to recover the cyclic organization, whereas harmonic projection recovers the underlying cycles.
\label{fig:simulation-broken}}
\end{figure}

The reported ground truth configuration is representative, and all results are based on 100 independent trials with independently generated noise (three settings $\sigma =0.1, 1, 10$) and time series. The synthetic data-generation process does {\it not} use any component of the proposed  method. All methods are blind to the ground truth cyclic information. Thus, although the evaluation target is cyclic by design, the data-generating mechanism is independent of the proposed method.

We performed stress-test experiments in two ground-truth settings: three full cycles and three broken cycles. In the broken-cycle settings, at least one edge per cycle was attenuated to $0.1$, below the noise level, thereby masking or partially disrupting cycles at the observation level. Figure~\ref{fig:simulation-broken} shows the three broken-cycle experiment in setting.

\paragraph{Results.}
Recovery accuracy was quantified using cosine similarity between the estimated and ground-truth edge flows \citep{luo.2018,zhao.2025}. Cosine similarity is scale-invariant and measures directional agreement independently of magnitude, making it particularly suitable for evaluating cyclic recovery. Unlike edgewise accuracy, it remains sensitive to recovery of the underlying circulation even when individual edges are noisy or partially disrupted.

Tables~\ref{table:full-cycle-3}--\ref{table:broken-cycle-2} summarize results over 100 simulations. In the fully observed three-cycle network, all baseline methods achieved low cosine similarities (\(0.02\)--\(0.22\)), indicating poor recovery of the true cyclic organization. After harmonic projection, recovery increased substantially across all estimators, reaching approximately \(0.79\)--\(0.83\) regardless of noise level. Similar behavior was observed when one edge in each cycle was weakened below the noise level. While baseline methods again failed (\(0.02\)--\(0.23\)), harmonic projection maintained high recovery (\(0.72\)--\(0.75\)). Performance remained remarkably stable across noise levels \(\sigma=0.1,1,\) and \(10\), demonstrating robustness to both observation noise and partially broken cycles. These results indicate that harmonic projection recovers persistent cyclic organization even when individual cycle edges are weak, noisy, or missing in the observed interactions.

\begin{table*}[t]
\centering
\caption{Cosine similarity (mean $\pm$ s.d.) over 100 trials for recovering ground truth circulations under  across different noise levels $\sigma$. The top block reports baseline methods, and the bottom block reports their harmonic projections. A cosine similarity of $1$ indicates perfect recovery of the true cyclic flow. Three interacting-cycle network with full cyclic ground truth.}
\label{table:full-cycle-3}
\footnotesize
\setlength{\tabcolsep}{1pt}
\renewcommand{\arraystretch}{1.05}
\begin{tabular}{c|c|c|c|c|c|c|c|c}
\hline
Method & Noise $\sigma$ & Granger & SEM & Bayesian & Transfer Entropy & Lagged Corr. & NOTEARS & CCD \\
\hline
Baseline 
& 0.10 & $0.19 \pm 0.02$ & $0.17 \pm 0.03$ & $0.17 \pm 0.03$ & $0.21 \pm 0.01$ & $0.21 \pm 0.05$ & $0.02 \pm 0.04$ & $0.03 \pm 0.05$ \\
& 1.00 & $0.20 \pm 0.02$ & $0.17 \pm 0.02$ & $0.17 \pm 0.02$ & $0.21 \pm 0.01$ & $0.22 \pm 0.05$ & $0.03 \pm 0.05$ & $0.02 \pm 0.04$ \\
& 10.0 & $0.19 \pm 0.02$ & $0.17 \pm 0.02$ & $0.17 \pm 0.03$ & $0.21 \pm 0.01$ & $0.21 \pm 0.05$ & $0.02 \pm 0.04$ & $0.02 \pm 0.05$ \\
\hline
Harmonic 
& 0.10 & $0.82 \pm 0.01$ & $0.80 \pm 0.02$ & $0.79 \pm 0.03$ & $0.83 \pm 0.00$ & $0.79 \pm 0.00$ & -- & -- \\
& 1.00 & $0.82 \pm 0.01$ & $0.80 \pm 0.02$ & $0.80 \pm 0.02$ & $0.83 \pm 0.00$ & $0.79 \pm 0.00$ & -- & -- \\
& 10.0 & $0.82 \pm 0.01$ & $0.79 \pm 0.03$ & $0.79 \pm 0.03$ & $0.83 \pm 0.00$ & $0.79 \pm 0.00$ & -- & -- \\
\hline
\end{tabular}
\end{table*}

\begin{table*}[t]
\centering
\caption{Three interacting-cycle simulation in which at least one edge in each cycle is broken.}
\label{table:broken-cycle-2}
\footnotesize
\setlength{\tabcolsep}{1pt}
\renewcommand{\arraystretch}{1.05}
\begin{tabular}{c|c|c|c|c|c|c|c|c}
\hline
Method & Noise $\sigma$ & Granger & SEM & Bayesian & Transfer Entropy & Lagged Corr. & NOTEARS & CCD \\
\hline
Baseline 
& 0.10 & $0.19 \pm 0.02$ & $0.16 \pm 0.03$ & $0.16 \pm 0.03$ & $0.20 \pm 0.01$ & $0.22 \pm 0.05$ & $0.02 \pm 0.04$ & $0.02 \pm 0.04$ \\
& 1.00 & $0.19 \pm 0.02$ & $0.17 \pm 0.02$ & $0.16 \pm 0.02$ & $0.20 \pm 0.01$ & $0.22 \pm 0.05$ & $0.03 \pm 0.04$ & $0.03 \pm 0.06$ \\
& 10.0 & $0.18 \pm 0.02$ & $0.16 \pm 0.03$ & $0.17 \pm 0.02$ & $0.20 \pm 0.01$ & $0.23 \pm 0.05$ & $0.02 \pm 0.04$ & $0.02 \pm 0.05$ \\
\hline
Harmonic 
& 0.10 & $0.74 \pm 0.01$ & $0.72 \pm 0.03$ & $0.72 \pm 0.03$ & $0.75 \pm 0.00$ & $0.73 \pm 0.00$ & -- & -- \\
& 1.00 & $0.74 \pm 0.01$ & $0.72 \pm 0.02$ & $0.72 \pm 0.02$ & $0.75 \pm 0.00$ & $0.73 \pm 0.00$ & -- & -- \\
& 10.0 & $0.74 \pm 0.01$ & $0.72 \pm 0.03$ & $0.72 \pm 0.03$ & $0.75 \pm 0.00$ & $0.73 \pm 0.00$ & -- & -- \\
\hline
\end{tabular}
\end{table*}

\section{Application}

\paragraph{Human Neural Networks.}
We applied the proposed framework to resting-state functional magnetic
resonance imaging (rs-fMRI) data from 400 healthy young adults (age
22–36 years, mean $29.24 \pm 3.39$ years) drawn from the Human
Connectome Project (HCP) \citep{vanessen.2012}. Resting-state fMRI measures spontaneous blood oxygenation level-dependent (BOLD) fluctuations that reflect spontaneous neuronal activity \citep{fiecas.2016,huang.2020.NM}. The resulting high-dimensional multivariate time series provide a natural application for statistical inference on directed interactions in large-scale brain networks \citep{worsley.2002,lindquist.2008,zhu.2023}.

For each subject, we analyzed 648 seconds (10.8 minutes) of data.
Directed edge flows were estimated using time-lagged Pearson
correlations computed within 20-second sliding windows advanced in
5-second increments \citep{huang.2020.NM,keilholz.2017,petri.2014}.
The window length captures lagged dependencies while preserving
temporal variation in interaction structure \citep{keilholz.2017}. The window length was chosen to cover the main temporal extent of the BOLD hemodynamic response: the response typically begins after approximately $1$--$2$ seconds, peaks around $4$--$6$ seconds, and can exhibit an undershoot over approximately $10$--$20$ seconds \citep{glover.1999}. Thus, a $20$-second window provides a practical short-window scale for estimating time-resolved rs-fMRI connectivity while retaining sufficient temporal localization. For each node pair, the dominant lagged correlation defined the
directed interaction.

\paragraph{Failure of Direct Averaging.} 
Due to inter-subject variability and temporal asynchrony, direct averaging
of time-varying edge flows largely cancels directional effects, producing
extremely weak mean connectivity (maximum correlation $=0.034$ in
Figure~\ref{fig:averageflow}, left). The Rayleigh test \citep{mardia.2009}
yields a p-value of 0.50, indicating no coherent population-level alignment.

\begin{figure}[t]
	\centering
	\includegraphics[width=1\linewidth]{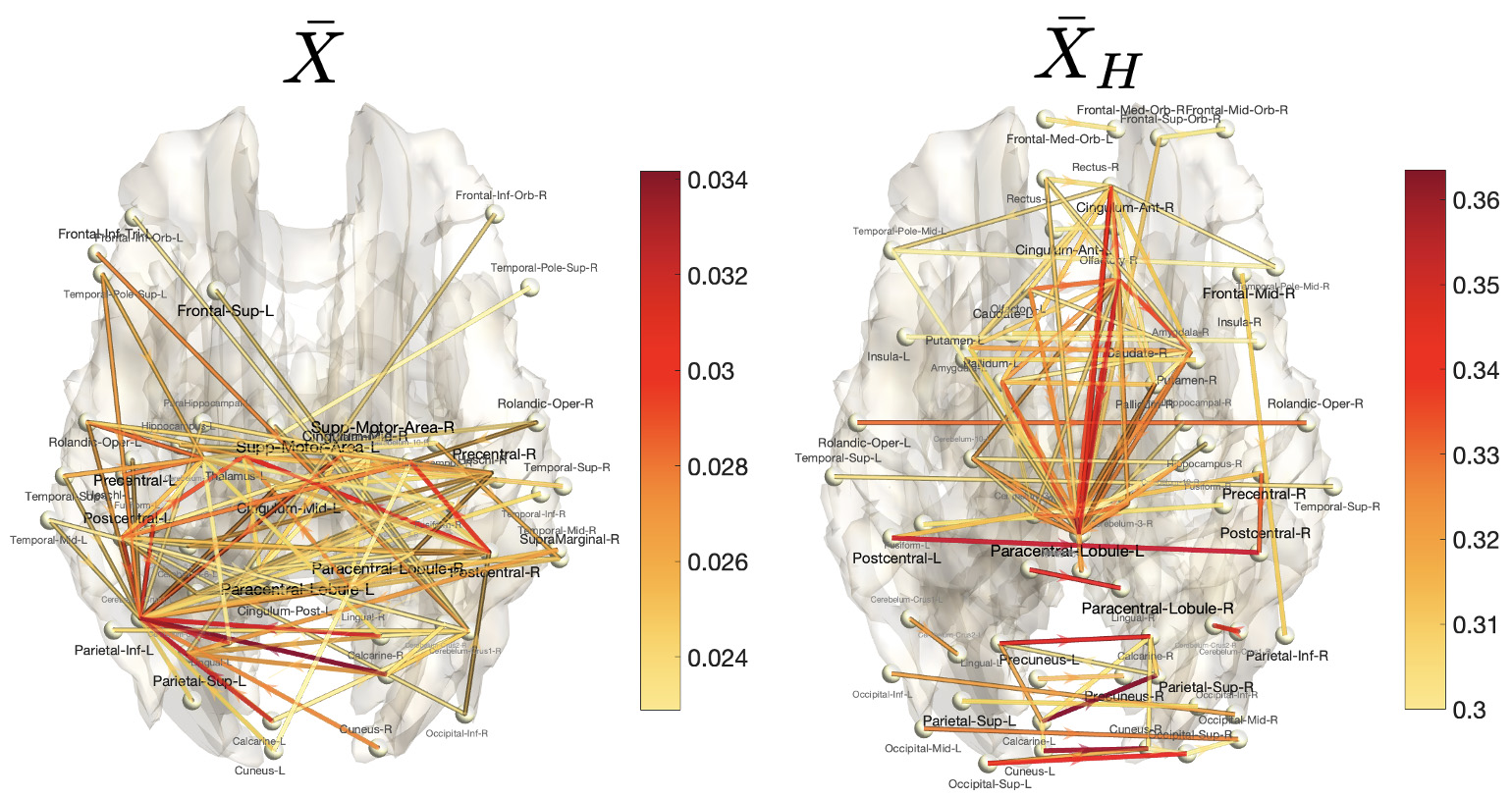}
\caption{
{\bf Left:} Average edge flow $\bar X$ over time and subjects derived from time-lagged Pearson correlations. The mean edge magnitudes are small (approximately 0.02–0.034) and statistically insignificant (p = 0.50); no edge exceeds a correlation of 0.3. Here we display the 100 largest edges, all of which remain statistically insignificant.
{\bf Right:} Average harmonic flow $\bar X_H$ over time and subjects computed from the same time-lagged correlations, which are is highly significant (p $< 10^{-29}$). The 100 largest harmonic edges (with magnitudes all above  0.3) are shown.}
\label{fig:averageflow}
\end{figure}

This reflects a fundamental challenge in resting-state network analysis:
time-resolved functional connectivity is not temporally synchronized
across subjects or across time \citep{huang.2020.NM}. 
 Transient correlations fluctuate in both sign
and magnitude, so direct averaging of dynamic directed networks is
\emph{ill-posed}; opposing directional patterns cancel, obscuring recurrent
structure. Consequently, many prior studies rely on static connectivity
summaries, sacrificing temporal and directional information
\citep{huang.2020.NM,ma.2023}.

At the same time, large-scale neural systems exhibit recurrent and
feedback-driven dynamics \citep{deco.2013,liu.2025}. Yet most brain network methods impose acyclicity or focus on feedforward dependencies
\citep{peters.2017,weichwald.2021,zheng.2018}, excluding cyclic interactions as
primary organizational principles. This mismatch limits the ability of existing models to identify persistent
recurrent structure and motivates frameworks that explicitly target cyclic interactions. The proposed harmonic flow offers a principled and
structurally grounded resolution to this longstanding challenge.

\paragraph{Results.}
Despite the absence of explicit tasks, the population-level harmonic flow
\(\bar X_H\), obtained by averaging across subjects and time, exhibits a
structured and temporally stable organization. This stability enables
meaningful aggregation of dynamic networks across individuals and time
windows (Figure~\ref{fig:averageflow}, right), in contrast to conventional
time-resolved connectivity measures, which largely cancel under averaging.
A Rayleigh test applied to the averaged harmonic flow yields
\(p < 10^{-29}\), indicating exceptionally strong population-level cyclic
alignment.

\begin{figure}[t]
	\centering
	\includegraphics[width=1\linewidth]{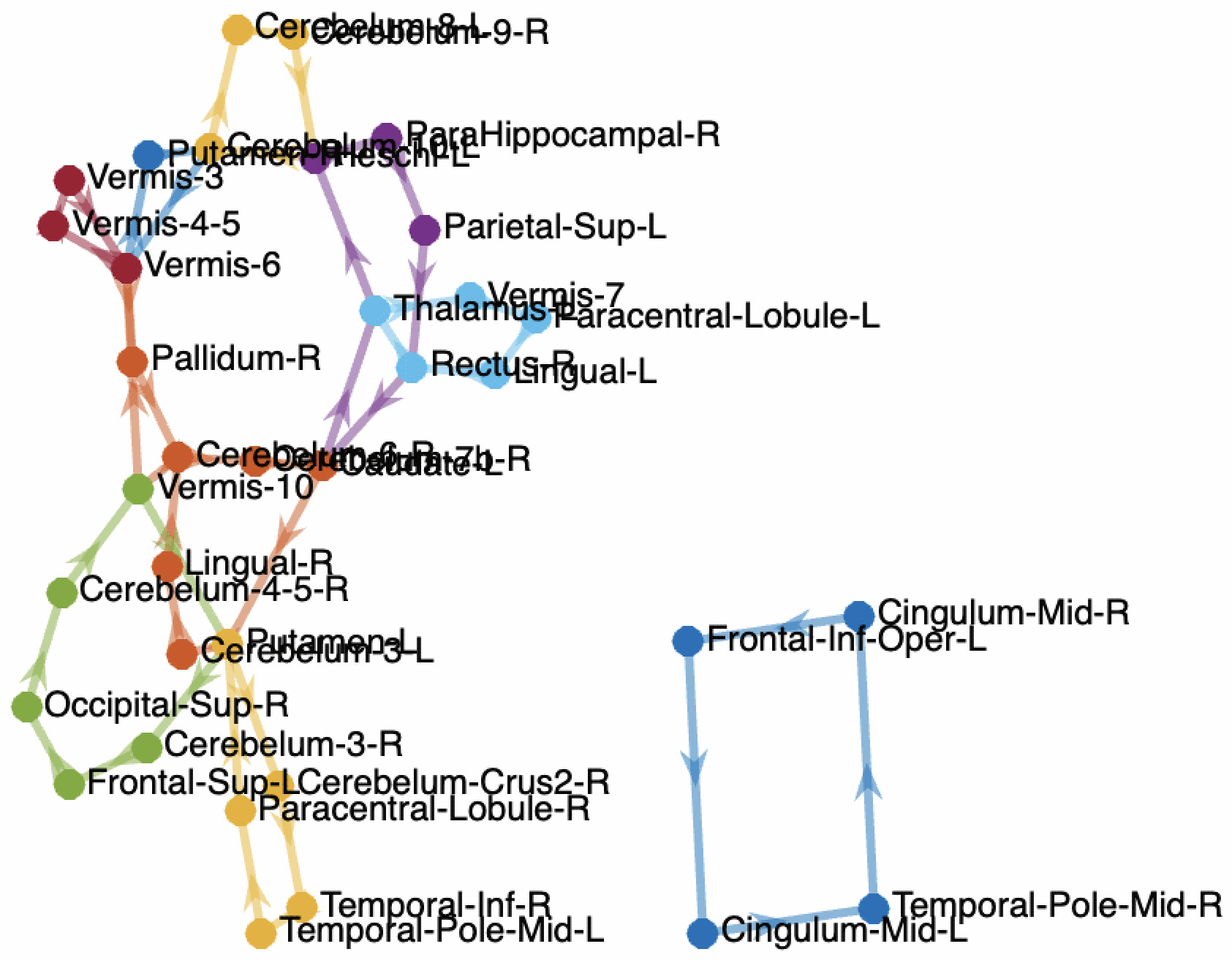}
\caption{Top 10 cycles out of 534 identified from the mean harmonic flow \(\bar U\) across 400 human brain networks ($p<10^{-29}$). The cycles repeatedly involve shared brain regions but not identical directed edges, indicating reproducible cyclic organization at the node level rather than exact edgewise recurrence. This explains why direct edge-weight averaging fails to align cycles.}
\label{fig:cyclenodes}
\end{figure}

At the group level, the harmonic component reveals a small number of
large-scale recurrent cycles that persist across subjects. These cycles
link bilaterally symmetric sensory and motor cortices with midline
structures, forming recurrent loops consistent with coordinated
interhemispheric processing. The resulting anatomically interpretable
patterns indicate that harmonic flow captures intrinsic functional
symmetries and stable feedback pathways that are difficult to recover using acyclic models or static connectivity averages.

\section{Conclusion}

We introduced a variational formulation for directed interactions based on dissipative
Lagrangian dynamics. Unlike DAG-based approaches that exclude or relax
cycles, the proposed framework treats recurrence as a primary structure of interest. In the strongly dissipative regime, the dynamics converge to a
harmonic steady state that isolates the stable cyclic component determined
by network topology. Thus cycles are not imposed structurally, but emerge
as the asymptotic limit of the dissipative dynamics.

Applied to dynamic human neural networks, raw directed edge flows largely
cancel under averaging, whereas harmonic projection reveals strong and
reproducible group-level cyclic organization. This suggests that recurrence
manifests as stable global alignment rather than as large marginal edge
effects. By selecting the harmonic component through variational dynamics,
the framework provides a principled mechanism for identifying cyclic structure as the intrinsic stable limit of dissipative dynamics in
feedback-dominated systems.

\appendix

\section{Proofs to Theorems}

\begin{proof}[Proof to Theorem 1]
Consider a perturbed edge-flow trajectory \(X_\varepsilon(t)=X(t)+\varepsilon \eta(t)\), where the perturbation \(\eta:[0,1]\to\mathbb{R}^{|E|}\) is smooth and satisfies the boundary conditions \(\eta(0)=0\), \(\eta(1)=0\)
and nonzero only inside $(0,1)$. The corresponding action is
\[
\mathcal{A}[X_\varepsilon]
=
\int_{0}^{\infty}\mathcal{L}(X_\varepsilon(t),\dot X_\varepsilon(t))\,dt.
\]
The action associated with this perturbed trajectory is
\[
\mathcal{A}[X_\varepsilon]
=
\frac12 \int_{0}^{\infty}
\|\dot X+\varepsilon\dot\eta\|_2^2
-
 (X+\varepsilon\eta)^\top \Delta_1 (X+\varepsilon\eta)
\,dt.
\]
Differentiating with respect to \(\varepsilon\) at \(\varepsilon=0\) gives
\[
\delta \mathcal{A}[X_\varepsilon]
=
\left.\frac{d}{d\varepsilon}\mathcal{A}[X_\varepsilon]\right|_{\varepsilon=0}
=
\int_{0}^{\infty}
\Big(
\dot X^\top \dot\eta
-
\eta^\top \Delta_1 X
\Big)\,dt,
\]
where we used the symmetry of \(\Delta_1\) to simplify the second term. Integration by parts gives
\[
\int_{0}^{\infty}\dot X^\top \dot\eta\,dt
=
\big[\dot X^\top \eta\big]_{0}^{\infty}
-
\int_{0}^{\infty}\ddot X^\top \eta\,dt.
\]
The boundary term vanishes because \(\eta(0)=0\) and \(\eta(t)=0\) for sufficiently large \(t\). Subsequently,
\bqn
\label{eq:variation}
\delta \mathcal{A}[X_\varepsilon]
=
-
\int_{0}^{\infty}
\eta(t)^\top
\big(
\ddot X(t)+\Delta_1 X(t)
\big)\,dt.
\eqn
The variation must be zero for \emph{every} choice of perturbation \(\eta(t)\). Thus,
$
\ddot X(t)+\Delta_1 X(t)=0,
$
which is the {\it Euler--Lagrange equation} \citep{arnold.2013} governing the evolution of  flow.
\end{proof}


\begin{proof}[Proof to Theorem 2]
From variation (\ref{eq:variation}) for Theorem \ref{thm:ELD},
\[
\delta \mathcal{A}[X]
=
-\int_0^\infty
\eta(t)^\top\big(\ddot X(t)+\Delta_1 X(t)\big)\,dt.
\]
For Rayleigh dissipation, we have
\[
- \int_0^\infty \langle F(t), \delta X(t)\rangle\,dt
=
\int_0^\infty \gamma\, \dot X(t)^\top \eta(t)\,dt,
\]
with \(\delta X(t)=\eta(t)\). Combining terms gives
\[
\int_0^\infty
\eta(t)^\top
\big(
\ddot X(t)+\gamma \dot X(t)+\Delta_1 X(t)
\big)\,dt
=
0.
\]
Since this equality should holds for all admissible perturbations \(\eta(t)\), the integral must vanish, yielding the damped Euler--Lagrange equation. The converse follows immediately by reversing the above steps.
\end{proof}


\begin{proof}[Proof to Theorem 3]

For any edge flow $X\in\mathbb{R}^{|E|}$, we have
\begin{align*}
\langle X,\Delta_1 X\rangle
&=\big\langle X,\mathbf{B}_1^\top\mathbf{B}_1 X\big\rangle
  +\big\langle X,\mathbf{B}_2\mathbf{B}_2^\top X\big\rangle\\
&=\big\langle \mathbf{B}_1 X,\mathbf{B}_1 X\big\rangle
  +\big\langle \mathbf{B}_2^\top X,\mathbf{B}_2^\top X\big\rangle\\
&=\|\mathbf{B}_1 X\|_2^2+\|\mathbf{B}_2^\top X\|_2^2.
\end{align*}

If $X\in\ker(\Delta_1)$, then $\Delta_1 X=0$ and hence
$
0
=
\langle X,\Delta_1 X\rangle
=
\|\mathbf{B}_1 X\|_2^2+\|\mathbf{B}_2^\top X\|_2^2.
$
Since both terms are nonnegative, each must vanish individually. Therefore
$\mathbf{B}_1 X=0$ and $\mathbf{B}_2^\top X=0$, implying
$X\in\ker(\mathbf{B}_1)\cap\ker(\mathbf{B}_2^\top)$. Conversely, if $X\in\ker(\mathbf{B}_1)\cap\ker(\mathbf{B}_2^\top)$, then
$\mathbf{B}_1 X=0$ and $\mathbf{B}_2^\top X=0$, which yields
$
\Delta_1 X
=
\mathbf{B}_1^\top(\mathbf{B}_1 X)
+
\mathbf{B}_2(\mathbf{B}_2^\top X)
=
0.
$
Thus $X\in\ker(\Delta_1)$. Subsequently, $\mathbf{B}_1 X = 0$ and $\mathbf{B}_2^\top X = 0$.
The condition $\mathbf{B}_1 X = 0$ means that, at every vertex,
the signed sum of incident edge flows vanishes. Thus $X$ has no
net inflow or outflow at any vertex, and therefore forms a closed
circulation, i.e., a $1$-cycle in simplicial homology \citep{hatcher.2002}.
The additional condition $\mathbf{B}_2^\top X = 0$ removes components
that can be written as boundaries of $2$-simplices (filled-in triangles).
Consequently, $X$ represents circulation along cycles that are not
boundaries of higher-dimensional faces. If $X \neq 0$, at least one such cyclic component is nontrivial.
\end{proof}

%

\begin{proof}[Proof to Theorem 4]
By definition, the total variance of the edge flow is
\[
\operatorname{Var}(X)
=
\mathbb{E}\|X-\bar{X}\|_2^2
=
\operatorname{tr}(\Sigma_X).
\]
Under the isotropic assumption $\Sigma_X=\sigma^2 I$, this becomes
$
\operatorname{Var}(X)
=
\operatorname{tr}(\sigma^2 I)
=
|E|\,\sigma^2.
$
The harmonic flow is obtained by orthogonal projection,
$
X_H = \mathcal{P}_H X,
$
where $\mathcal{P}_H$ is symmetric and idempotent, i.e.,
$\mathcal{P}_H^\top=\mathcal{P}_H$ and $\mathcal{P}_H^2=\mathcal{P}_H$.
The covariance of $X_H$ is therefore
\[
\Sigma_H
=
\mathbb{E}\!\left[(X_H-\mathbb{E}X_H)(X_H-\mathbb{E}X_H)^\top\right]
=
\mathcal{P}_H \Sigma_X \mathcal{P}_H.
\]
Substituting $\Sigma_X=\sigma^2 I$ yields
$
\Sigma_H
=
\sigma^2 \mathcal{P}_H.$ 
The total variance of the harmonic flow is then
\[
\operatorname{Var}(X_H)
=
\operatorname{tr}(\Sigma_H)
=
\sigma^2 \operatorname{tr}(\mathcal{P}_H).
\]
Since $\mathcal{P}_H$ is an orthogonal projector, its trace equals its rank,
which is the dimension of the harmonic subspace:
\[
\operatorname{tr}(\mathcal{P}_H)
=
\operatorname{rank}(\mathcal{P}_H)
=
\dim \ker(\Delta_1)
=
\beta_1.
\]
Hence,
\(
\operatorname{Var}(X_H)
=
\sigma^2 \beta_1
\)
and obtain the result. 
\end{proof}


\begin{proof}[Proof to Theorem 5]
Let the spectral decomposition of $\Sigma_X$ be
\(
\Sigma_X
=
\sum_{i=1}^{|E|} \sigma_i u_i u_i^\top, 
\sigma_1 \ge \sigma_2 \ge \cdots \ge \sigma_{|E|} \ge 0,
\)
where $\{u_i\}_{i=1}^{|E|}$ forms an orthonormal basis of eigenvectors.
The total variance of $X$ is
\(
\operatorname{Var}(X)
=
\mathbb{E}\|X-\bar{X}\|_2^2
=
\operatorname{tr}(\Sigma_X)
=
\sum_{i=1}^{|E|} \sigma_i.
\)
The covariance of the harmonic flow $X_H=\mathcal{P}_H X$ is
\[
\Sigma_H
=
\mathbb{E}\!\left[\mathcal{P}_H (X-\bar X)(X-\bar X)^\top \mathcal{P}_H\right]
=
\mathcal{P}_H \Sigma_X \mathcal{P}_H.
\]
Therefore, the total variance of $X_H$ is
\(
\operatorname{Var}(X_H)
=
\operatorname{tr}(\Sigma_H)
=
\operatorname{tr}(\Sigma_X \mathcal{P}_H),
\)
using the fact that $\mathcal{P}_H$ is  symmetric and idempotent, i.e.,
$\mathcal{P}_H^2=\mathcal{P}_H$ and
$\mathcal{P}_H^\top=\mathcal{P}_H$.

Let $\{\psi_j\}_{j=1}^{\beta_1}$ be an orthonormal basis of
$\ker(\Delta_1)$. Since $\mathcal{P}_H$ is the orthogonal projector onto
$\ker(\Delta_1)$, it admits the spectral representation
\(
\mathcal{P}_H
=
\sum_{j=1}^{\beta_1} \psi_j \psi_j^\top.
\)
Thus,
\[
\operatorname{Var}(X_H)
=
\operatorname{tr}\! \Big(
\Sigma_X \sum_{j=1}^{\beta_1} \psi_j \psi_j^\top
\Big)
=
\sum_{j=1}^{\beta_1} \psi_j^\top \Sigma_X \psi_j,
\]
which is a Rayleigh sum of $\Sigma_X$ over the
$\beta_1$-dimensional harmonic subspace \citep{horn.1985}. By Ky Fan’s inequality \citep{fan.1949}, we have
\[
\sum_{i=|E|-\beta_1+1}^{|E|} \sigma_i
\;\le\;
\sum_{j=1}^{\beta_1} \psi_j^\top \Sigma_X \psi_j
\;\le\;
\sum_{i=1}^{\beta_1} \sigma_i
\]
and obtain the result. 
\end{proof}

\end{document}